\title{Streamlining Variational Inference for Constraint Satisfaction Problems}
\author{Aditya Grover, Tudor Achim, Stefano Ermon \\
 Computer Science Department\\
 Stanford University\\
  \texttt{\{adityag, tachim, ermon\}@cs.stanford.edu} \\
  }
\newcommand{\mc}[1]{\mathcal{#1}}
\newtheorem{proposition}{Proposition}
\newcommand{\eg}{\emph{e.g.}}
\newcommand{\ie}{\emph{i.e.}}
\def\Plus{\texttt{+}}
\def\Minus{\texttt{-}}
 \newcommand{\citet}{\cite}
\newcommand{\citep}{\cite}
\algnewcommand{\LineComment}[1]{\State \(\triangleright\) #1}
\begin{document}

\maketitle

\begin{abstract}
    
Several algorithms for solving constraint satisfaction problems are based on survey propagation, a variational inference scheme used to obtain approximate marginal probability estimates for variable assignments. These marginals correspond to how frequently each variable is set to true among satisfying assignments, and are used to inform branching decisions during search; however, marginal estimates obtained via survey propagation are approximate and can be self-contradictory. We introduce a more general branching strategy based on streamlining constraints, which sidestep hard assignments to variables. We show that streamlined solvers consistently outperform decimation-based solvers on random $k$-SAT instances for several problem sizes, shrinking the gap between empirical performance and theoretical limits of satisfiability by $16.3\%$ on average for $k=3,4,5,6$.

\end{abstract}

\section{Introduction}

Constraint satisfaction problems (CSP), such as boolean satisfiability~(SAT), are useful modeling abstractions for many artificial intelligence and machine learning problems, including planning~\citep{do2001planning}, scheduling~\citep{scheduling}, and logic-based probabilistic modeling frameworks such as Markov Logic Networks~\cite{richardson2006markov}. 
More broadly, the ability to combine  constraints capturing domain knowledge with statistical reasoning has been successful across diverse areas such as ontology matching, information extraction, entity resolution, and computer vision ~\cite{euzenat2007ontology,banko2007open,singla2006entity,ren2018structured,stewart2017label}. Solving a CSP involves finding an assignment to the variables that renders all of the problem's constraints satisfied, if one exists. 
Solvers that explore the search space exhaustively do not scale since the state space is exponential in the number of variables; thus, the selection of branching criteria for variable assignments is the central design decision for improving the performance of these solvers~\cite{biere2009handbook}.

Any CSP can be represented as a factor graph, with variables as nodes and the constraints between these variables (known as clauses in the SAT case) as factors. 
With such a representation, we can design branching strategies by inferring the \textit{marginal probabilities} of each variable assignment. 
Intuitively, the variables with more extreme marginal probability for a particular value are more likely to assume that value across the satisfying assignments to the CSP. In fact, if we had access to an oracle that could perform exact inference, one could trivially branch on variable assignments with non-zero marginal probability and efficiently find solutions (if one exists) to hard CSPs such as SAT in time linear in the number of variables. In practice however, exact inference is intractable for even moderately sized CSPs and approximate inference techniques are essential for obtaining estimates of marginal probabilities.

Variational inference is at the heart of many such approximate inference techniques. The key idea is to cast inference over an intractable joint distribution as an optimization problem over a family of tractable approximations to the true distribution~\citep{blei2017variational,wainwright2008graphical,zhou2017stochastic}. Several such approximations exist, e.g., mean field, belief propagation etc. In this work, we focus on survey propagation. Inspired from statistical physics, survey propagation is a message-passing algorithm that corresponds to belief propagation in a ``lifted'' version of the original CSP and  underlines many state-of-the-art solvers for random CSPs~\cite{mezard2002analytic,maneva2007new,kroc2009message}. 

Existing branching rules for survey propagation iteratively pick variables with the most confident marginals and fix their values (by adding unary constraints on these variables) in a process known as \emph{decimation}. This heuristic works well in practice, but struggles with a high variance in the success of branching, as the unary constraints leave the \textit{survey inspired decimation} algorithm unable to recover in the event that a contradictory assignment (i.e., one that cannot be completed to form a satisfying assignment) is made. 
Longer branching predicates, defined over multiple variables, have lower variance and are more effective both in theory and practice~\cite{ermon2014low,achim2016fourier,achlioptasstochastic,zhao2016closing,grover2016variational,gomes2007xors-csp}. 

In this work, we introduce improved branching heuristics for survey propagation by extending this idea to CSPs; namely, we show that branching on more complex predicates than single-variable constraints greatly improves survey propagation's ability to find solutions to CSPs. 
Appealingly, the more complex, multi-variable predicates which we refer to as \textit{streamlining constraints}, can be easily implemented as additional factors (not necessarily unary anymore) in message-passing algorithms such as survey propagation. 
For this reason, branching on more complex predicates is a natural extension to survey propagation. 

 Using these new branching heuristics, we develop an algorithm and empirically benchmark it on families of random CSPs. Random CSPs exhibit sharp phase transitions between satisfiable and unsatisfiable instances and are an important model to analyze the average hardness of CSPs, both in theory and practice~\cite{mitchell1992hard,molloy2003models}. In particular, we consider two such CSPs: $k$-SAT where constraints are restricted to disjunctions involving exactly $k$ (possibly negated) variables~\cite{achlioptas2004threshold} and XORSAT which substitutes disjunctions in $k$-SAT for XOR constraints of fixed length. 
 On both these problems, our proposed algorithm outperforms the competing survey inspired decimation algorithm that branches based on just single variables, increasing solver success rates.

\section{Preliminaries}

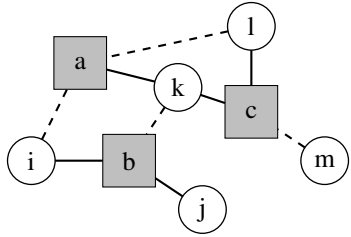
\begin{figure}[t]
    \centering
    \begin{tikzpicture}[scale=0.650]
        \begin{scope}[VertexStyle/.style = {
                draw=black,
                thin,
                shape=rectangle,
                minimum width=0.7cm,
                minimum height=0.7cm,
                fill=lightgray,
            }]
            \Vertex[x=3,y=5]{a}
            \Vertex[x=4,y=3]{b}
            \Vertex[x=6.5,y=4]{c}
        \end{scope}
        \begin{scope}
            \Vertex[x=2,y=3]{i}
            \Vertex[x=5.5,y=2]{j}
            \Vertex[x=5,y=4.5]{k}
            \Vertex[x=6.5,y=5.75]{l}
            \Vertex[x=8,y=3]{m}
        \end{scope}
        \draw[dashed, thick, black](i) -- (a);
        \draw[thick, black](i) -- (b);
        \draw[thick, black](j) -- (b);
        \draw[thick, black](k) -- (a);
        \draw[dashed, thick, black](k) -- (b);
        \draw[thick, black](k) -- (c);
        \draw[dashed, thick, black](l) -- (a);
        \draw[thick, black](l) -- (c);
        \draw[dashed, thick, black](m) -- (c);
    \end{tikzpicture}
    
    \caption{Factor graph for a 3-SAT instance with 5 variables (circles) and 3 clauses (squares). A solid (dashed) edge 
    between a clause and a variable 
    indicates that the clause contains the variable as a positive (negative) literal. This instance
    corresponds to 
    $(\neg x_i\vee x_k \vee \neg x_l)\wedge (x_i\vee x_j\vee \neg x_k) \wedge (x_k\vee x_l\vee \neg x_m)$, with the clauses $a,b,c$ listed in order. 
    }
    \label{fig:factor_graph_sat}

\end{figure}

Every CSP can be encoded as a boolean SAT problem expressed in Conjunctive Normal Form (CNF), and we will use this representation for the remainder of this work.
Let $V$ and $C$ denote index sets for $n$ Boolean \textit{variables} and $m$ \textit{clauses} respectively. A \textit{literal} is a variable or its negation; a clause is a disjunction of literals.
A CNF formula $F$ is a conjunction of clauses, and is written as 
$(l_{11} \vee \ldots \vee l_{1k_1}) \wedge \ldots \wedge (l_{m1} \vee \ldots \vee l_{mk_m})$. Each $(l_{j1} \vee\ldots\vee l_{jk_j})$ is a clause with $k_j$ literals.
For notational convenience, the variables will be indexed with letters $i,j,k,\ldots$ and the clauses will be indexed with letters $a,b,c,\ldots$. Each variable $i$ is Boolean, taking values $x_i \in \{0, 1\}$. A formula is satisfiable is there exists an assignment to the variables such that all the clauses are satisfied, where a clause is satisfied if at least one literal evaluates to  true.

Any SAT instance can be represented as an undirected
graphical model where each clause corresponds to a factor, and is connected to the variables in its scope. Given an assignment to the variables in its scope, a factor evaluates to $1$ if the corresponding clause evaluates to True, and $0$ otherwise. The corresponding joint probability distribution is uniform over the set of satisfying assignments. An example factor graph illustrating the use of our notation is given in Figure~\ref{fig:factor_graph_sat}.

$k$-SAT formulas are ones where all clauses $(l_{j1} \vee\ldots\vee l_{jk_j})$ have exactly $k$ literals, i.e., $k_j=k$ for $j=1, \cdots, m$. Random $k$-SAT
instances are generated by choosing each literal's variable and negation
independently and uniformly at random in each of the $m$ clauses. 
It has been shown that these instances have a very distinctive behavior where the probability of an instance having a solution has a phase transition explained as a function of the constraint density, $\alpha = m/n$, for a problem with $m$ clauses and $n$ variables for large enough $k$. These instances exhibit a sharp crossover at a threshold density $\alpha_s(k)$: they are almost always satisfiable below this threshold, and they become unsatisfiable for larger constraint densities~\cite{ding2015proof,coja2016asymptotic}.
Empirically, random instances with constraint density close to the satisfiability threshold are difficult to solve
~\cite{marino2016backtracking}.

\subsection{Survey propagation}
\label{subsec:surveyp}
The base algorithm used in many state-of-the-art solvers for constraint satisfaction problems such as random $k$-SAT is \textit{survey inspired decimation}~\cite{braunstein2005survey,mezard2002analytic,gableske2013performance,marino2016backtracking}. 
The algorithm employs survey propagation, a
message passing procedure that computes approximate single-variable marginal probabilities 
for use in a decimation procedure. Our approach uses the same message passing procedure, and we review it here for completeness.

Survey propagation is an iterative procedure for estimating variable marginals in a factor graph. In the context of a factor graph corresponding to a Boolean formula, these marginals represent approximately the probability of a variable taking on a particular assignment when sampling uniformly from the set of satisfying assignments of the formula. Survey propagation considers three kinds of assignments for a variable: $0$, $1$, or unconstrained (denoted by $\ast$). A high value for marginals corresponding to either of the first two assignments indicates that the variables assuming the particular assignment make it likely for the overall formula to be satisfiable, whereas a high value for the unconstrained marginal indicates that satisfiablility is likely regardless of the variable assignment.

In order to estimate these marginals from a factor graph, we follow a message passing protocol where we first compute \textit{survey} messages for each edge in the graph. There are two kinds of survey messages: messages $\{\eta_{i\rightarrow a}\}_{i \in V,a\in C(i)}$ from variable nodes $i$ to clauses $a$, and messages $\{\eta_{a\rightarrow i}\}_{a\in C,i \in V(a)}$ from clauses to variables.
These messages can be interpreted as \emph{warnings of unsatisfiability}. 
\begin{enumerate}
\item If we let $V(a)$ to be the set of variables appearing in clause $a$, then the message sent from a clause $a$ to variable $i$, $\eta_{a\rightarrow i}$, is intuitively the probability that all variables in $V(a)\backslash \{i\}$ are in the state that violates clause $a$. Hence, clause $a$ is issuing a warning to variable $i$.
\item The reverse message from variable $i$ to clause $a$ for some value $x_i$, $\eta_{i\rightarrow a}$, is interpreted as the probability of variable $i$ assuming the value $x_i$ that violates clause $a$. 
\end{enumerate}

As shown in Algorithm~\ref{alg:spdecimate}, the messages from factors (clauses) to variables $\eta_{a\rightarrow i}$ are initialized randomly [Line~\ref{line:init}] and updated until a predefined convergence criteria [Lines~\ref{line:mp_start}-\ref{line:mp_end}]. Once the messages converge to $\eta_{a\rightarrow i}^*$,
we can estimate the approximate marginals 
$\mu_i(0),\mu_i(1),\mu_i(*)$
for each variable $i$. In case survey propagation does not converge even after repeated runs, or a contradiction is found, the algorithm output is UNSAT. The message passing updates SP-Update~[Line~\ref{line:sp_update}] and the marginalization procedure Marginalize~[Line~\ref{line:marginalize}] are deferred to Appendix~\ref{app:sp_subroutines} for ease of presentation. We refer the reader to \citet{mezard2002analytic} and \citet{braunstein2005survey} for a detailed analysis of the algorithm and connections to statistical physics.

\subsection{Decimation and Simplification}
The magnetization of a variable $i$, defined as $M(i):=\vert\mu_i(0)-\mu_i(1)\vert$, is used as a heuristic bias to determine how constrained the variable is to take a particular value. The magnetization can be a maximum of one which occurs when either of the marginals is one and a minimum of zero when the estimated marginals are equal.\footnote{Other heuristic biases are also possible. For instance, \cite{marino2016backtracking} use the bias $1-\min(\mu_i(1), \mu_i(0))$.
}
The decimation procedure involves setting the variable(s) with the highest magnetization(s) to their most likely values based on the relative magnitude of $\mu_i(0)$ vs. $\mu_i(1)$ [Lines~\ref{line:decimation_start}-\ref{line:decimation_end}].

The algorithm then branches on these variable assignments and simplifies the formula by unit propagation [Line~\ref{line:unit_prop}]. In unit propagation, we recursively iterate over all the clauses that the decimated variable appears in. If the polarity of the variable in a literal matches its assignment, the clause is satisfied and hence, the corresponding clause node and all its incident variable edges are removed from the factor graph. If the polarity in the literal does not match the assignment, 
only the edge originating from this particular variable node incident to the clause node is removed from the graph. For example, setting variable $k$ to 0 in Figure~\ref{fig:factor_graph_sat} leads to removal of edges incident to $k$ from $a$ and $c$, as well as all outgoing edges from $b$ (because $b$ is satisfied).

\subsection{Survey Inspired Decimation}

The full iterative process of survey propagation (on the simplified graph from the previous iteration) followed by decimation is continued until a satisfying assignment is found, or a stopping condition is reached beyond which the instance is assumed to be sufficiently easy for local search using a standard algorithm such as WalkSAT~\cite{selman1994noise}. Note that when the factor graph is a tree and survey propagation converges to the exact warning message probabilities, Algorithm~\ref{alg:spdecimate} is guaranteed to select good variables to branch on and to find a solution (assuming one exists).

\begin{algorithm}[t]
   \caption{SurveyInspiredDecimation($V, C$)}
   \label{alg:spdecimate}
\begin{algorithmic}[1]
   \State Initialize $\mc{V}\leftarrow V$ and $\mc{C}\leftarrow C$
\State~\label{line:init} Initialize messages $\{\eta_{a\rightarrow i}\}_{a \in C,i \in V(a)}$ at random 
   \While{$\left(\sum_i\vert\mu_i(0)-\mu_i(1)\vert > \epsilon\right)$}
   \LineComment{Message passing inference}
   \Repeat~\label{line:mp_start}
   \State~\label{line:sp_update}  $\{\eta_{a\rightarrow i}\} \leftarrow \textrm{SP-Update}(\mc{V},\mc{C},\{\eta_{a\rightarrow i}\})$
   \Until{\textrm{Convergence to} $\{ \eta^{*}_{a\rightarrow i}\}$}~\label{line:mp_end}
    \For{$i=1,\ldots,|\mc{V}|$}~\label{line:marginal_start}
    \State~\label{line:marginalize} $\mu_i(0),\mu_i(1),\mu_i(*)\leftarrow \textrm{Marginalize}(\mc{V},\mc{C},\{\eta_{a\rightarrow i})$ 
   \EndFor~\label{line:marginal_end}
      \LineComment{Branching (Decimation)}
        \State Choose $i^*\leftarrow\arg\max_{i\in\mc{V}} \vert\mu_i(0)-\mu_i(1)\vert$
        ~\label{line:decimation_start}
        \State Set $y^*\leftarrow\arg\max_{y\in\{0,1\}} \mu_{i^*}(y)$~\label{line:decimation_end}
   \LineComment{Simplification}
        \State Update $\mc{V},\mc{C}\leftarrow$
        UnitPropagate$(\mc{V},\mc{C}\cup \{x_{i^*}=y^*\})$~\label{line:unit_prop}
   \EndWhile
\State \Return LocalSearch$(\mc{V},\mc{C})$
\end{algorithmic}
\end{algorithm}

However, the factor graphs for CSPs are far from tree-like in practice and thus, the main factor affecting the success of survey inspired decimation is the quality of the estimated marginals. 
If these estimates are inaccurate, it is possible that the decimation procedure chooses to fix variables in contradictory configurations.
To address this issue, we propose to use streamlining constraints.

\section{Streamlining survey propagation}\label{sec:streamline}
Combinatorial optimization algorithms critically depend on good heuristics for deciding where to branch during search~\cite{biere2009handbook}. Survey propagation provides a strong source of information for the decimation heuristic.
As discussed above, the approximate nature of message-passing implies that the ``signal" might be misleading. We now describe a more effective way to use the information from survey propagation.

Whenever we have a combinatorial optimization problem over $X=\left\{ 0,1 \right\}^n$ 
and wish to find a solution $s\in S\subseteq X$, we may augment the original feasibility problem
with constraints that partition the statespace $X$ into disjoint statespaces and recursively search the resulting subproblems.
Such partitioning constraints can significantly simplify search by exploiting the structure of the solution set $S$  and are known as \emph{streamlining constraints}~\cite{gomes2004streamlined}. Good streamlining constraints will provide a balance between yielding significant shrinkage of the search space and safely avoiding reductions in the solution density of the resulting subproblems. Partitioning the space based on the value of a single variable (like in decimation) performs well on the former at the cost of the latter. We therefore introduce a different constraining strategy that strives to achieve a more balanced trade-off.

\subsection{Streamlining constraints for constraint satisfaction problems}

The success of survey inspired decimation relies on the fact that marginals carry some signal about the likely assignments of variables. However, the factor graph becomes more dense as the constraint density approaches the phase transition threshold, making it harder for survey propagation to converge in practice. This suggests that the marginals might provide a weaker signal to the decimation procedure in early iterations.
Instead of selecting a variable to freeze in some configuration as in decimation, \eg, $x_i=1$, we propose a strictly more general streamlining approach where we use \emph{disjunction constraints} between subsets of highly magnetized variables, \eg, $(x_i \vee x_j) = 1$.

The streamlined constraints can cut out smaller regions of the search space while still making use of the magnetization signal. For instance, introducing a disjunction constraint between any pair of variables reduces the state-space by a factor of $4/3$ (since three out of four possible variable assignments satisfy the clause), in contrast to the decimation procedure in Algorithm~\ref{alg:spdecimate} which reduces the state space by a factor of $2$. Intuitively, when branching with a length-$2$ clause such as $(x_i \vee x_j)$ we make an (irreversible) mistake only if we guess the value of \textit{both} variables wrong. Decimation can also be seen as a special case of streamlining for the same choice of literal. To see why, we note that in the above example the acceptable variable assignments for decimation $(x_i, x_j)=\{(1, 0), (1, 1)\}$ are a subset of the valid assignments for streamlining $(x_i, x_j)=\{(1, 0), (1, 1), (0,1)\}$.

The success of the streamlining constraints is strongly governed by the literals selected for participating in these added disjunctions. Disjunctions could in principle involve any number of literals, and longer disjunctions result in more conservative branching rules. But there are diminishing returns with increasing length, and so we restrict ourselves to disjunctions of length at most two in this paper.
Longer clauses can in principle be handled by the inference procedure used by message-passing algorithms, and we leave an exploration of this extension to future work. 

\subsection{Survey Inspired Streamlining}
The pseudocode for survey inspired streamlining is given in Algorithm~\ref{alg:spStreamline}. The algorithm replaces the decimation step of survey inspired decimation with a streamlining procedure that adds disjunction constraints to the original formula [Line~\ref{line:add_constraint}], thereby making the problem increasingly constrained until the search space can be efficiently explored by local search.

For designing disjunctions, we consider candidate variables with the highest magnetizations, similar to decimation. If a variable $i$ is selected, the polarity of the literal containing the variable is positive if $\mu_i(1)>\mu_i(0)$ and negative otherwise [Lines~\ref{line:start_disj}-\ref{line:end_disj}].

\begin{algorithm}[t]
   \caption{SurveyInspiredStreamlining($V, C, T$)}
   \label{alg:spStreamline}
\begin{algorithmic}[1]
   \State Initialize $\mc{V}\leftarrow V$ and $\mc{C}\leftarrow C$
\State Initialize messages $\{\eta_{a\rightarrow i}\}_{a \in C,i \in V(a)}$ at random 
   \While{$\sum_i |\mu_i(0)-\mu_i(1)| \geq \epsilon$}
   \Repeat
   \State $\{\eta_{a\rightarrow i}\} \leftarrow \textrm{SP-Update}(\mc{V},\mc{C},\{\eta_{a\rightarrow i}\})$
   \Until{\textrm{Convergence to} $\{\eta^{*}_{a\rightarrow i}\}$}
    \For{$i=1,\ldots,|\mc{V}|$}
    \State $\mu_i(0),\mu_i(1),\mu_i(*)\leftarrow \textrm{Marginalize}(\mc{V},\mc{C},\{\eta_{a\rightarrow i})$ 
   \EndFor
    \If {$t<T$} 
    \LineComment{Add Streamlining Constraints}
  \State\label{line:start_disj} Choose $i^*\leftarrow\arg\max_{i\in\mc{V}} \vert\mu_i(0)-\mu_i(1)\vert$
    \State Choose $j^*\leftarrow\arg\max_{i\in\mc{V}, i \neq i^*} \vert\mu_i(0)-\mu_i(1)\vert$
            \State Set $y^*\leftarrow\arg\max_{y\in\{0,1\}} \mu_{i^*}(y)$
                    \State\label{line:end_disj} Set $w^*\leftarrow\arg\max_{y\in\{0,1\}} \mu_{j^*}(y)$
        \State\label{line:add_constraint}  $\mc{C}\leftarrow \mc{C}\cup \{x_{i^*}=y^* \vee x_{j^*}=w^*\}$

  \Else
  \State Choose $i^*\leftarrow\arg\max_{i\in\mc{V}} \vert\mu_i(0)-\mu_i(1)\vert$
        \State Set $y^*\leftarrow\arg\max_{y\in\{0,1\}} \mu_{i^*}(y)$
        \State  $\mc{V},\mc{C}\leftarrow$
        UnitPropagate$(\mc{V},\mc{C}\cup \{x_{i^*}=y^*\})$

  \EndIf
   \EndWhile
\State \Return LocalSearch$(\mc{V},\mc{C})$
\end{algorithmic}
\end{algorithm}

Disjunctions use the signal from
the survey propagation messages without overcommitting to a particular variable assignment too early (as in decimation).
Specifically, without loss of generality, if we are given marginals $\mu_i(1)>\mu_i(0)$ and $\mu_j(1)>\mu_j(0)$
for variables $i$ and $j$, the new update adds the streamlining constraint $x_i \vee
x_j$ to the problem instead of overcommitting by constraining $i$ or $j$ to its most likely state. 
This approach leverages the signal from survey propagation, namely that it is
unlikely for $\neg x_i\wedge \neg x_j$ to be true, while also allowing for the possibility that one
of the two marginals may have been estimated incorrectly.  As long as streamlined constraints and decimation use the same bias signal (such as magnetization) for ranking candidate variables, adding streamlined constraints through the above procedure is guaranteed to not degrade performance compared with the decimation strategy in the following sense.
\begin{proposition}\label{prop:sid_vs_sis}
Let $F$ be a formula under consideration for satisfiability, $F_d$ be the formula obtained after one round of survey inspired decimation, and $F_s$ be the formula obtained after one round of survey inspired streamlining. If $F_d$ is satisfiable, then so is $F_s$.
\end{proposition}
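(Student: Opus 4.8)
The plan is to show that one round of decimation and one round of streamlining, applied to the same formula $F$ and the same survey-propagation output, agree on the top-ranked variable and its polarity, and that the decimation constraint is then logically stronger than the streamlining constraint; the inclusion of solution sets follows immediately.

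First I would fix the coupling implicit in comparing ``one round'' of each strategy: both run the identical message-passing subroutines (SP-Update and Marginalize) on $F$, so I take them to operate on the same converged marginals $\mu_i(0), \mu_i(1)$. Since both rank candidates by the \emph{same} magnetization bias $M(i) = |\mu_i(0) - \mu_i(1)|$, they select the same most-magnetized variable $i^*$, and because $y^* = \arg\max_{y \in \{0,1\}} \mu_{i^*}(y)$ is computed identically, the same polarity $y^*$.

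Next I would record what each round appends. Writing $\ell := (x_{i^*} = y^*)$ for the literal on the top variable and $\ell' := (x_{j^*} = w^*)$ for the literal streamlining selects on the runner-up variable, the streamlining branch (the case $t < T$) adds exactly the two-literal clause $\ell \vee \ell'$, giving $F_s = F \wedge (\ell \vee \ell')$. Decimation instead adds $\ell$ and then calls UnitPropagate; here I would note that unit propagation neither creates nor destroys models, so $F_d$ is satisfiable if and only if $F \wedge \ell$ is satisfiable.

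The core step is the elementary logical implication $\ell \models (\ell \vee \ell')$: every truth assignment that satisfies the single literal $\ell$ also satisfies the clause $\ell \vee \ell'$. Consequently every model of $F \wedge \ell$ is a model of $F \wedge (\ell \vee \ell') = F_s$, so the solution set of $F_d$ is contained in that of $F_s$, and satisfiability of $F_d$ forces satisfiability of $F_s$. There is no real computational obstacle; the only point needing care is the \emph{shared-signal assumption} emphasized in the surrounding text. If decimation and streamlining were permitted to rank variables or choose polarities by different criteria, then $i^*$ and $y^*$ need not coincide and $\ell$ need not appear as a disjunct of the streamlining clause, breaking the containment. I would therefore make the use of a common magnetization bias explicit, since it is precisely what guarantees that $\ell$ is one of the disjuncts of $\ell \vee \ell'$.
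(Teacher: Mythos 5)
Your proof is correct and follows essentially the same route as the paper: reduce $F_d$ to $F \wedge \ell_{i^*}$ via the soundness of unit propagation, observe $F_s = F \wedge (\ell_{i^*} \vee \ell_{j^*})$ by construction, and conclude from $\ell_{i^*} \models \ell_{i^*} \vee \ell_{j^*}$. Your explicit remark that the shared magnetization signal is what guarantees the decimation literal appears as a disjunct of the streamlining clause is a worthwhile clarification the paper only states in the surrounding text, but it does not change the argument.
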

\begin{proof}
Because unit-propagation is sound, the formula obtained after one round of survey inspired decimation is satisfiable if and only if $(F \wedge \ell_{i^*})$ is satisfiable, where the literal $\ell_{i^*}$ denotes either $x_{i^*}$ or $\neg x_{i^*}$. By construction, the formula obtained after one round of streamlining is $F \wedge (\ell_{i^*} \vee \ell_{j^*})$. It is clear that if $(F \wedge \ell_{i^*})$ is satisfiable, so is $F \wedge (\ell_{i^*} \vee \ell_{j^*})$. Clearly, the converse need not be true.
\end{proof}

\subsection{Algorithmic design choices}
A practical implementation of survey inspired streamlining requires setting some design hyperparameters. These hyperparameters have natural interpretations as discussed below.

\noindent \textbf{Disjunction pairing.} Survey inspired decimation scales to large instances by taking the top $R$ variables as decimation candidates at every iteration instead of a single candidate (Line~\ref{line:decimation_end} in Algorithm~\ref{alg:spdecimate}). The parameter $R$ is usually set as a certain fraction of the total number of variables $n$ in the formula, e.g., $1\%$. For the streamlining constraints, we take the top $2 \cdot R$ variables, and pair the variables with the highest and lowest magnetizations as a disjunction constraint. We remove these variables from the candidate list, repeating until we have added $R$ disjunctions to the original set of constraints. For instance, if $v_1, \cdots, v_{2R}$ are our top decimation candidates (with signs) in a particular round, we add the constraints $(v_1 \vee v_{2R}) \wedge (v_2 \vee v_{2R-1}) \wedge \cdots \wedge (v_R \vee v_{R+1})$.
Our procedure for scaling to top $R$ decimation candidates ensures that Proposition~\ref{prop:sid_vs_sis} holds, because survey inspired decimation would have added $(v_1) \wedge (v_2) \wedge \cdots \wedge (v_R)$ instead.

Other pairing mechanisms are possible, such as for example $(v_1 \vee v_{R+1}) \wedge (v_2 \vee v_{R+2}) \wedge \cdots \wedge (v_R \vee v_{R+R})$. Our choice is motivated by the observation that $v_{2R}$ is the variable we are least confident about - we therefore choose to pair it with the one we are most confident about ($v_1$). We have found our pairing scheme to perform slightly better in practice.

\noindent \textbf{Constraint threshold.} We maintain a streamlining constraint counter for every variable which is incremented each time the variable participates in a streamlining constraint. When the counter reaches the constraint threshold, we no longer consider it as a candidate in any of the subsequent rounds. This is done to ensure that no single variable dominates the constrained search space. 

\noindent \textbf{Iteration threshold.} The iteration threshold $T$ determines how many rounds of streamlining constraints are performed. While streamlining constraints smoothly guide search to a solution cluster, the trade-off being made is in the complexity of the graph. With every round of addition of streamlining constraints, the number of edges in the graph increases which leads to a higher chance of survey propagation failing to converge. To sidestep the failure mode, we perform $T$ rounds of streamlining before switching to decimation.

\section{Empirical evaluation}

\label{sec:experiments}
We streamlining constraints for random $k$-SAT instances for $k=\{3,4,5,6\}$ with $n=\{5\times 10^4, 4\times 10^4, 3\times 10^4, 10^4\}$ variables respectively and constraint densities close to the theoretical predictions of the phase transitions for satisfiability.

\begin{figure*}[t]
\centering

    \begin{minipage}{.5\textwidth}
        \centering
        \includegraphics[scale=0.4]{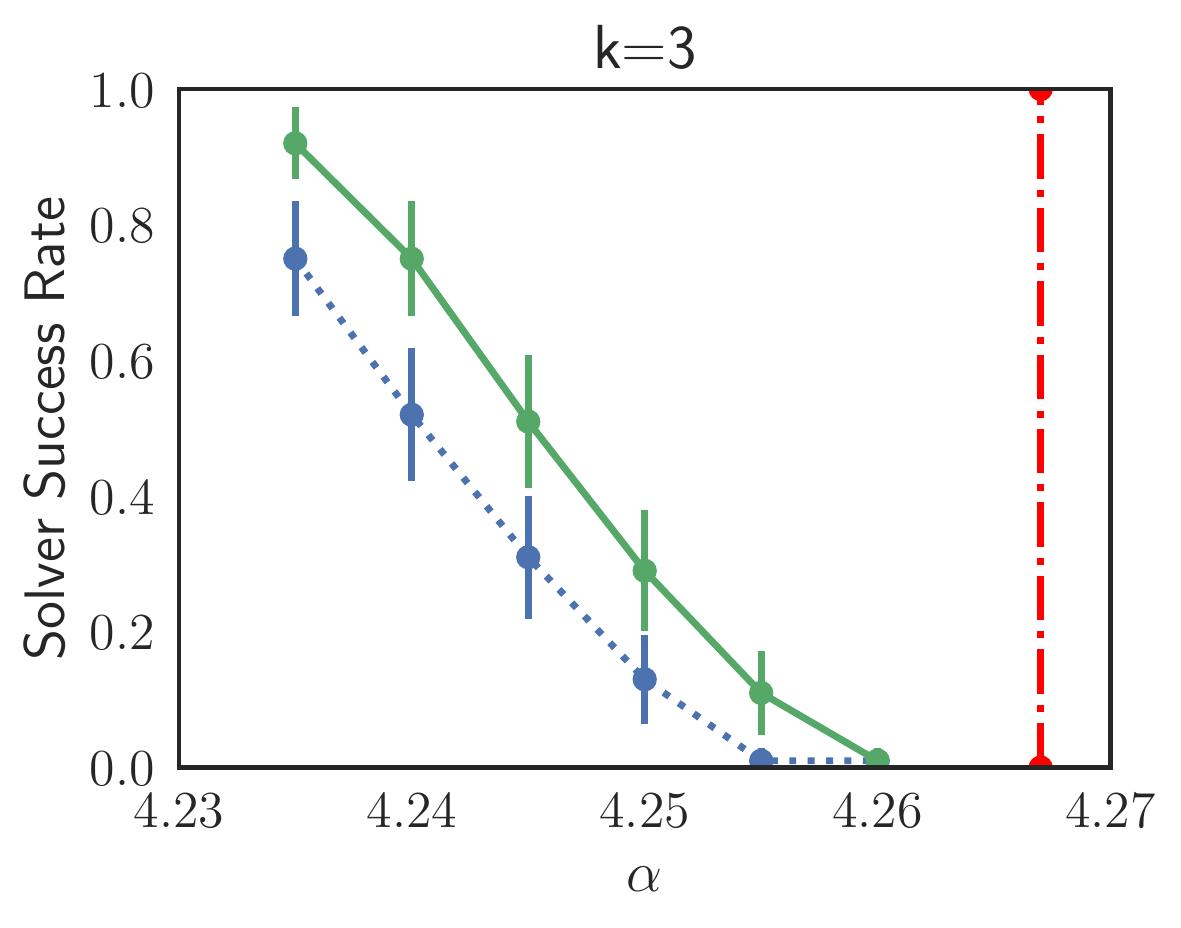}
    \end{minipage}%
    \begin{minipage}{0.5\textwidth}
        \centering
        \includegraphics[scale=0.4]{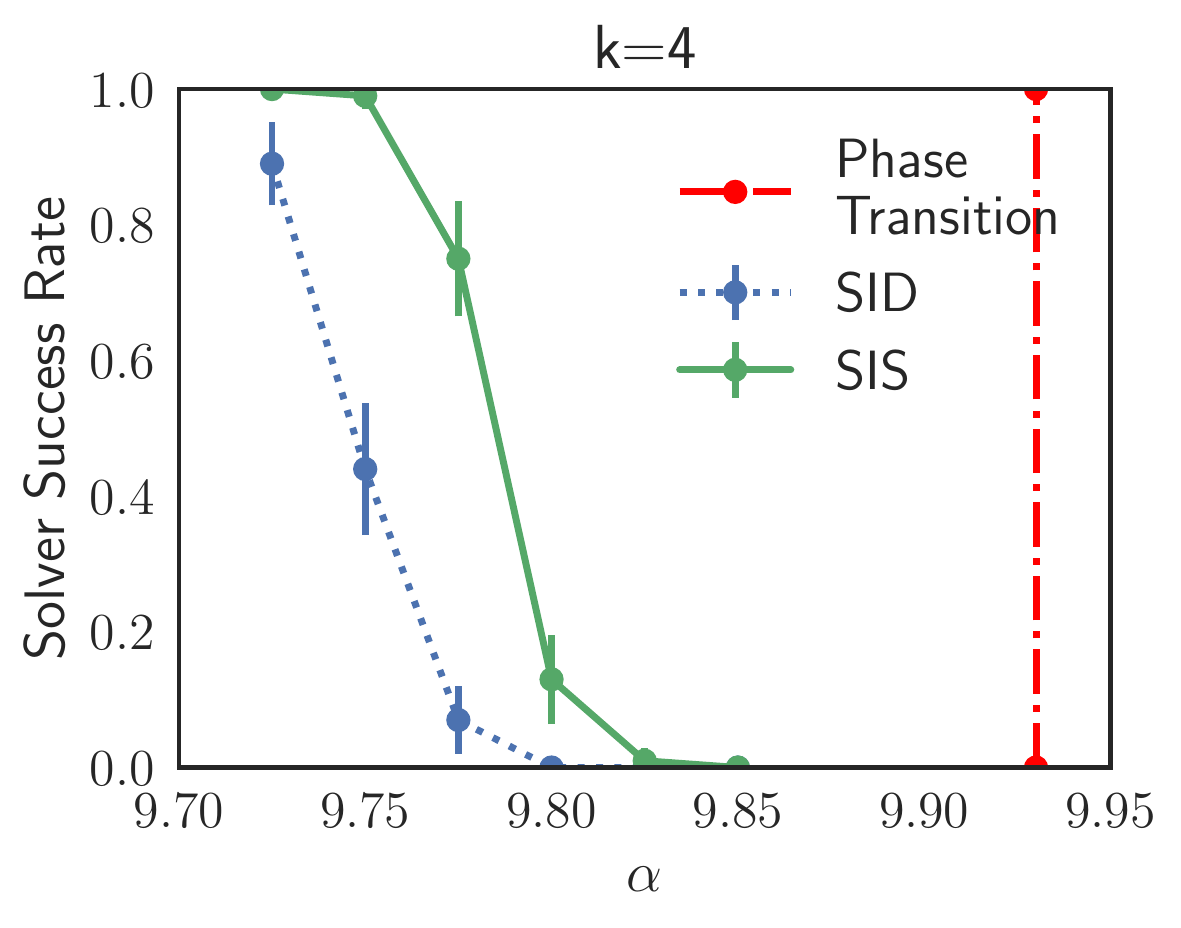}
    \end{minipage}
    
    \begin{minipage}{.5\textwidth}
        \centering
        \includegraphics[scale=0.4]{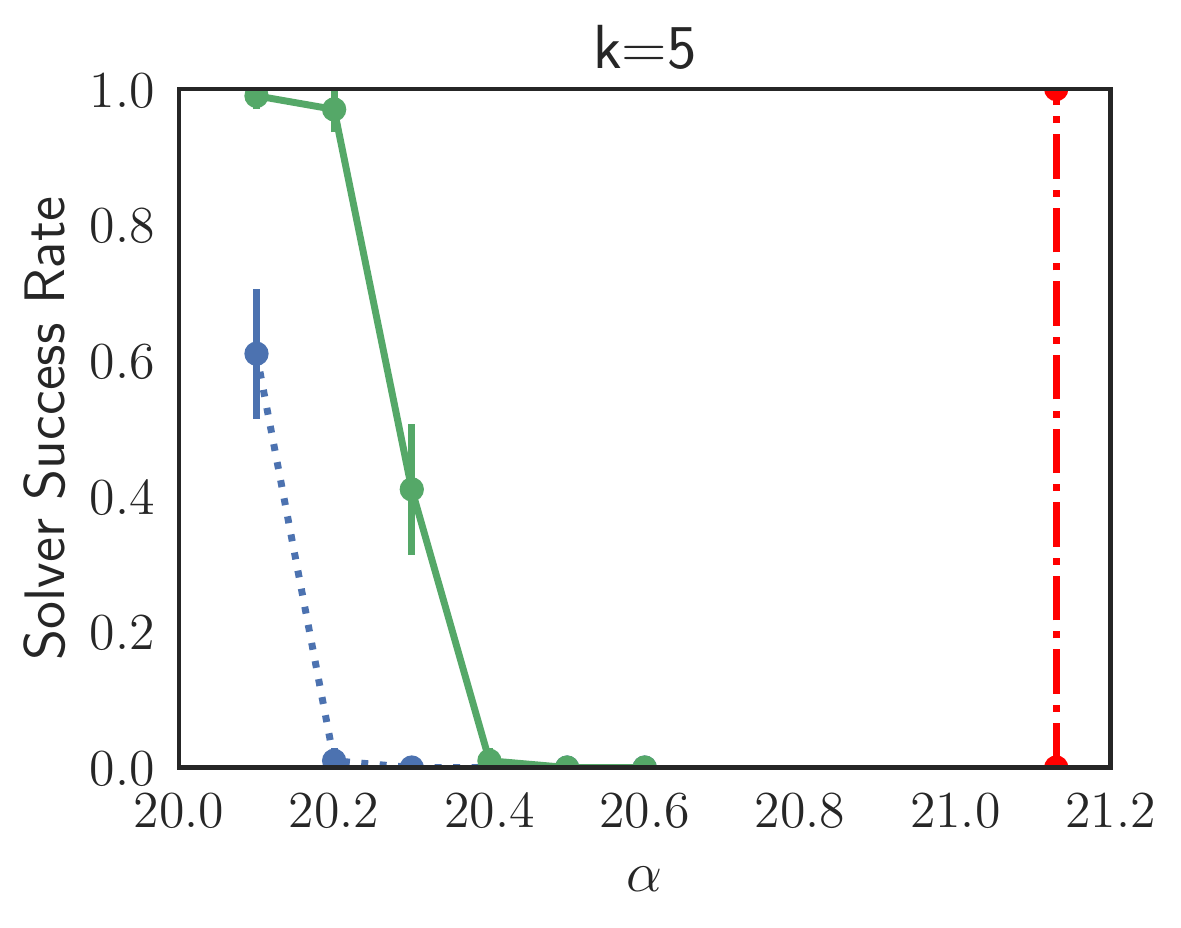}
    \end{minipage}%
    \begin{minipage}{0.5\textwidth}
        \centering
        \includegraphics[scale=0.4]{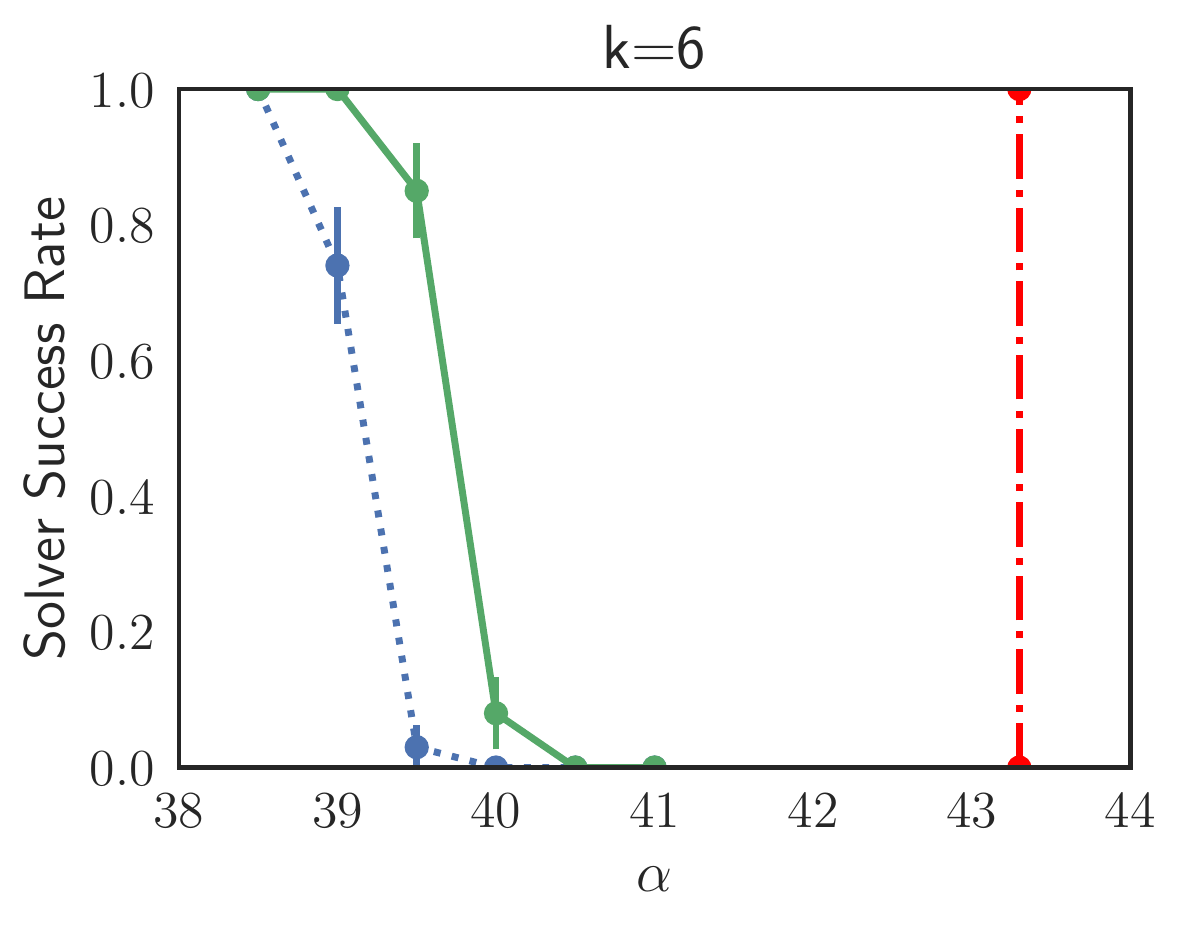}
    \end{minipage}
\caption{
Random $k$-SAT solver rates (with $95\%$ confidence intervals) for $k\in\{3,4,5,6\}$, for varying constraint densities $\alpha$.
The red line denotes the theoretical prediction for the phase transition of satisfiability. 
Survey inspired streamlining (SIS) drastically outperforms survey inspired decimation (SID) for all values of $k$.}
\label{fig:solutionrates}
\end{figure*}
\subsection{Solver success rates}\label{subsec:successrates}

In the first set of experiments, we compare survey inspired streamlining (SIS) with survey inspired decimation (SID). In line with~\citep{braunstein2005survey}, we fix $R=0.01n$ and each success rate is the fraction of $100$ instances solved for every combination of $\alpha$ and $k$ considered. The constraint threshold is fixed to $2$. The iteration threshold $T$ is a hyperparameter set as follows.  We generate a set of $20$ random $k$-SAT instances for every $\alpha$ and $k$. For these $20$ ``training" instances, we compute the empirical solver success rates varying T over $\{10, 20, \ldots, 100\}$. The best performing value of $T$ on these train instances is chosen for testing on $100$ fresh instances. All results are reported on the test instances.

\noindent \textbf{Results.} As shown in Figure~\ref{fig:solutionrates}, the streamlining constraints have a major impact on the solver success rates. 
Besides the solver success rates, we compare the algorithmic thresholds which we define to be the largest constraint density for which the algorithm achieves a success rate greater than $0.05$. The algorithmic thresholds are pushed from $4.25$ to $4.255$ for $k=3$, $9.775$ to $9.8$ for $k=4$, $20.1$ to $20.3$ for $k=5$, and $39$ to $39.5$ for $k=6$, shrinking the gap between the algorithmic thresholds and theoretical limits of satisfiability by an average of $16.3\%$.  This is significant as there is virtually no performance overhead in adding streamlining constraints.

\noindent \textbf{Distribution of failure modes.}
Given a satisfiable instance, solvers based on survey propagation could fail for two reasons.  First, the solver could fail to converge during message passing. Second, the local search procedure invoked after simplification of the original formula could timeout which is likely to be caused due to a pathological simplification that prunes away most (or even all) of the solutions. In our experiments, we find that the percentage of failures due to local search timeouts in SID and SIS are $36\%$ and $24\%$ respectively (remaining due to non-convergence of message passing).

These observations can be explained by observing the effect of decimation and streamlining on the corresponding factor graph representation of the random $k$-SAT instances. Decimation simplifies the factor graph as it leads to the deletion of variable and factor nodes, as well as the edges induced by the deleted nodes. This typically reduces the likelihood of non-convergence of survey propagation since the graph becomes less ``loopy'', but could lead to overconfident (incorrect) branching decisions especially in the early iterations of survey propagation. On the other hand, streamlining takes smaller steps in reducing the search space (as opposed to decimation) and hence are less likely to make inconsistent variable assignments. However, a potential pitfall is that these constraints add factor nodes that make the graph more dense, which could affect the convergence of survey propagation. 

\begin{figure}[t]
\centering
    \includegraphics[width=0.8\columnwidth]{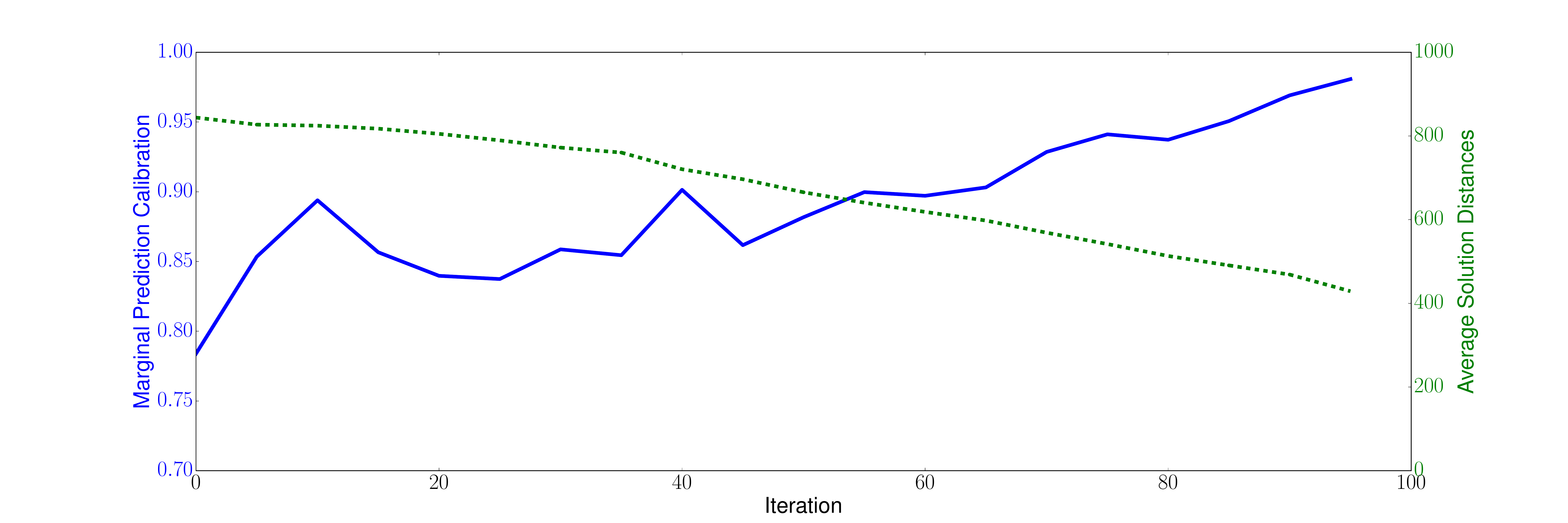}
    \caption{Marginal prediction calibration (blue) and sampled solution distances (green) during solver run on $3$-SAT
with $5000$ variables, $\alpha=4.15$, $T=90$.}
\label{fig:marginal_prediction}
\vskip -0.2in
\end{figure}

\begin{figure*}[t]
\begin{center}
\begin{minipage}{.30\textwidth}
    \centering
    \includegraphics[scale=0.13]{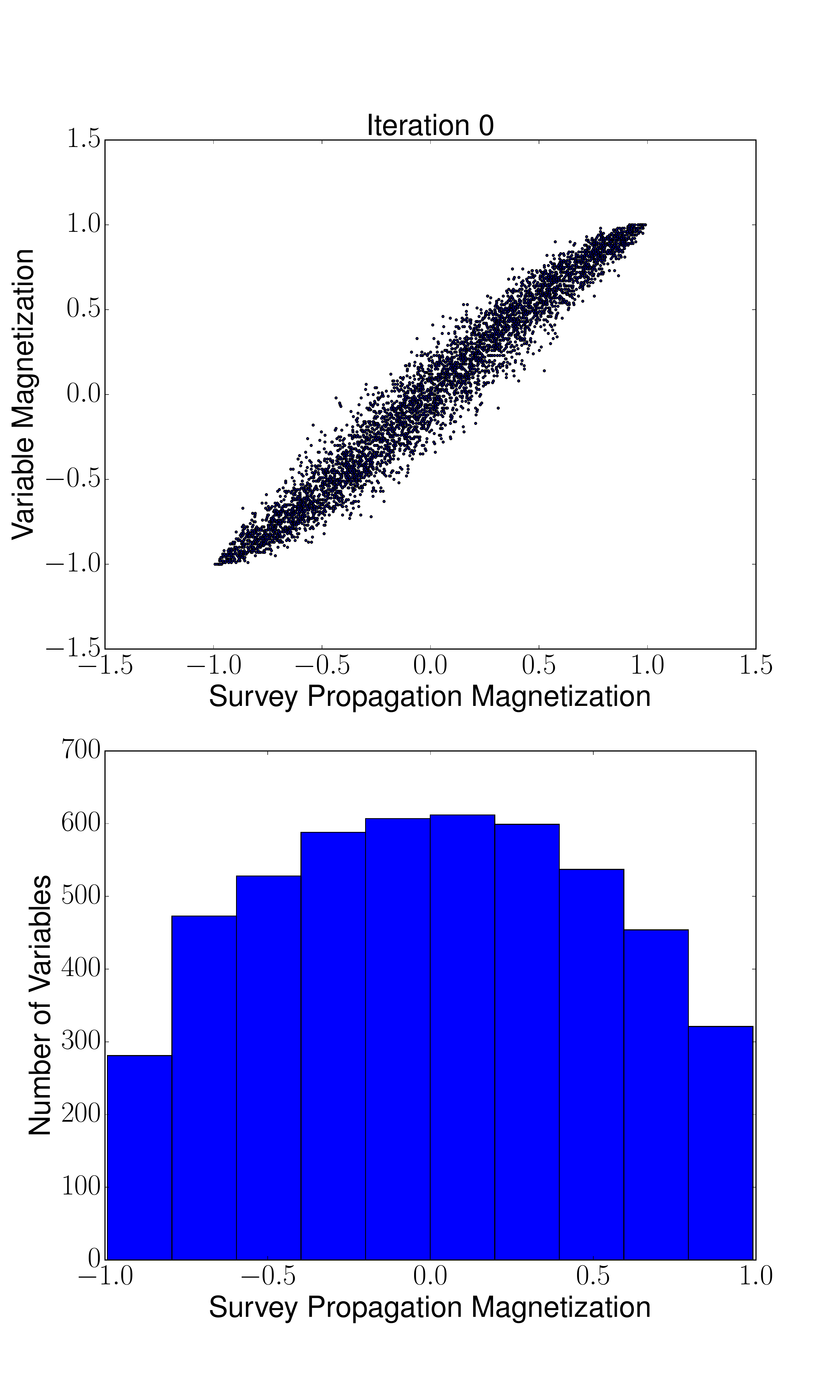}
\end{minipage}%
\begin{minipage}{0.30\textwidth}
    \centering
    \includegraphics[scale=0.13]{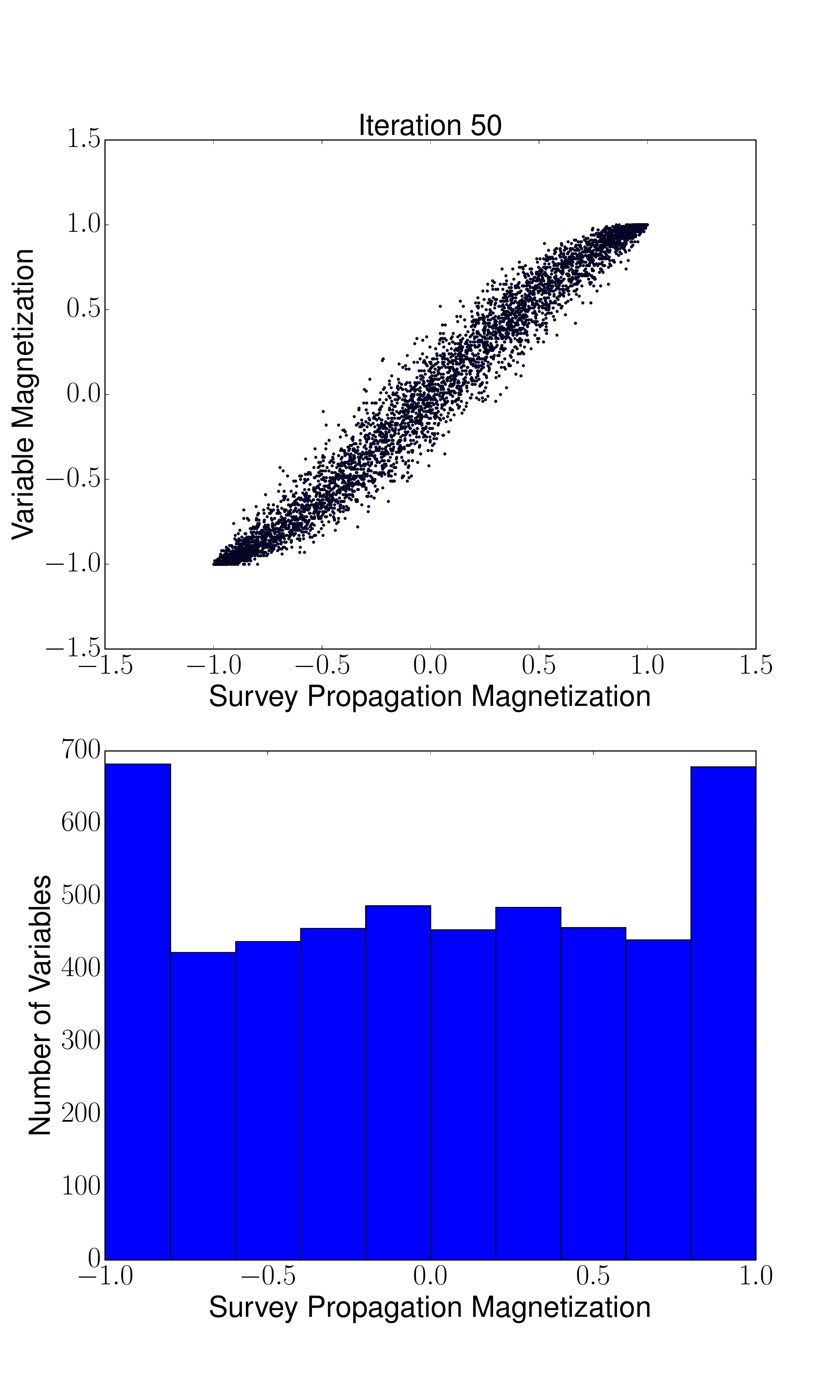}
\end{minipage}%
\begin{minipage}{0.30\textwidth}
    \centering
    \includegraphics[scale=0.13]{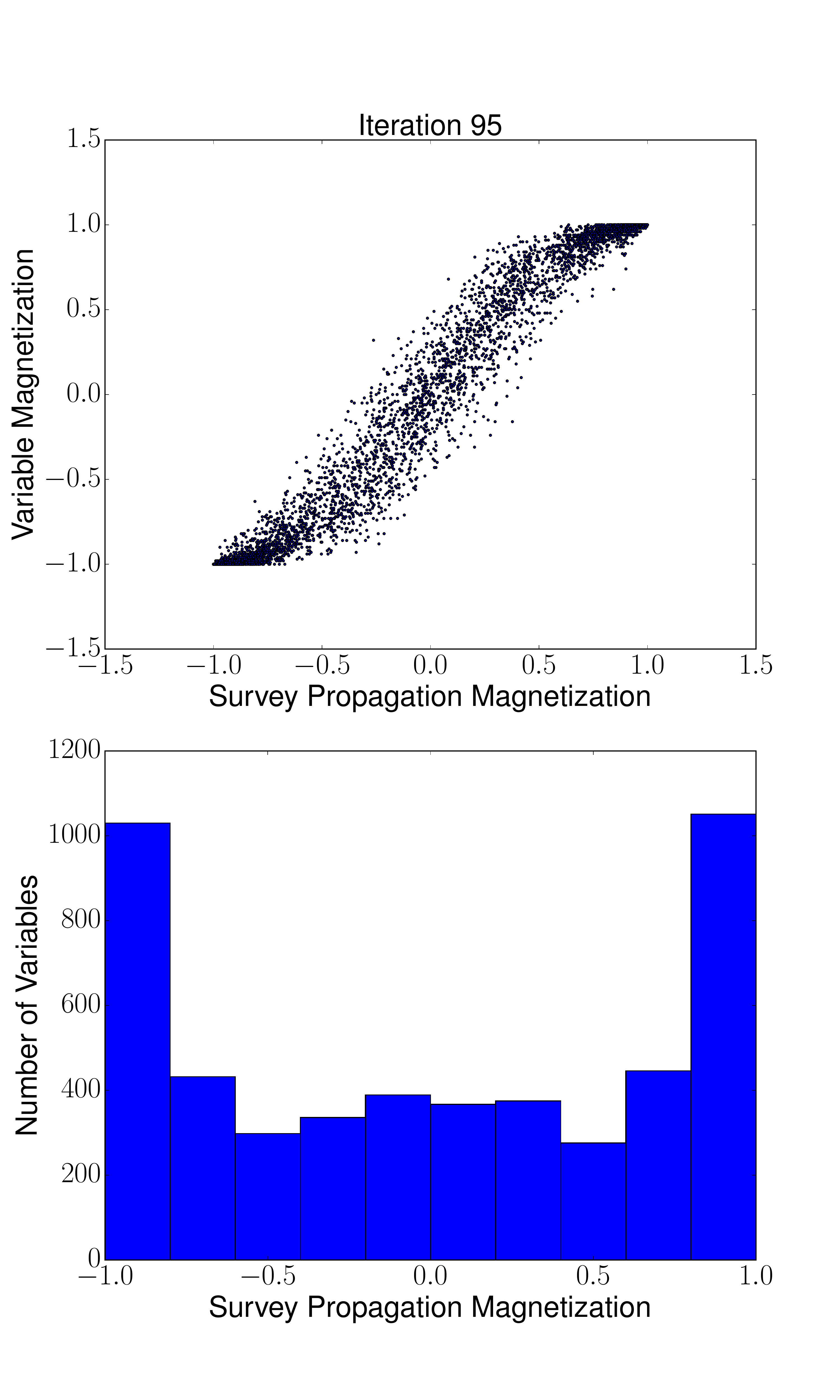}
\end{minipage}

\caption{\textbf{Top:} Correlation between magnetization and estimated marginal probabilities for the same problem instance as we add streamlining constraints. \textbf{Bottom:} Histogram of variables magnetizations. As streamlining constraints are added, the average confidence of assignments increases.} 
\label{fig:variable_information}
\end{center}
\end{figure*}

\subsection{Solution cluster analysis}

Figures~\ref{fig:marginal_prediction} and~\ref{fig:variable_information} reveal the salient features of survey inspired streamlining
as it runs on an instance of 3-SAT with a constraint density of $\alpha = 4.15$,
which is below the best achievable density but is known to be above the clustering threshold $\alpha_d(3)\approx 3.86$. The iteration threshold, $T$ was fixed to 90.
At each iteration of the algorithm we use SampleSAT~\cite{Wei:2004:TES:1597148.1597256} to sample
100 solutions of the streamlined formula.
Using these samples we estimate the marginal probabilities of all variables \ie, the fraction of solutions where a given variable is set to true. We use these marginal probabilities to 
estimate the \emph{marginal prediction calibration} \ie, the frequency that a
variable which survey propagation predicts has magnetization at least 0.9 has an estimated marginal at least as high as the prediction.

The increase in marginal prediction calibrations during the course of the algorithm (Figure~\ref{fig:marginal_prediction}, blue curve) suggests that the streamlining constraints are selecting branches that preserve most of the solutions.
This might be explained by the decrease in the average Hamming distance between pairs of sampled solutions 
over the course of the run (green curve).  
This decrease indicates that the streamlining constraints are guiding survey propagation to a subset
of the full set of solution clusters.

Over time, the algorithm is also
finding more extreme magnetizations, as shown in the bottom three histograms of Figure
\ref{fig:variable_information} at iterations $0$, $50$, and $95$. Because magnetization is used as a proxy for how reliably one can branch on a given variable, this indicates that the algorithm is getting more and more confident on which variables it is ``safe'' to branch on.
The top plots of Figure
\ref{fig:variable_information} show the empirical marginal of each
variable versus the survey propagation magnetization. These demonstrate that overall the survey propagation estimates are becoming more and more risk-averse: by picking variables with high magnetization to branch on, it will only select variables with (estimated) marginals close to one.
\begin{figure}[t]
\centering
\centering
   
    \begin{subfigure}[c]{.3\textwidth}
        \centering
        \includegraphics[scale=0.35]{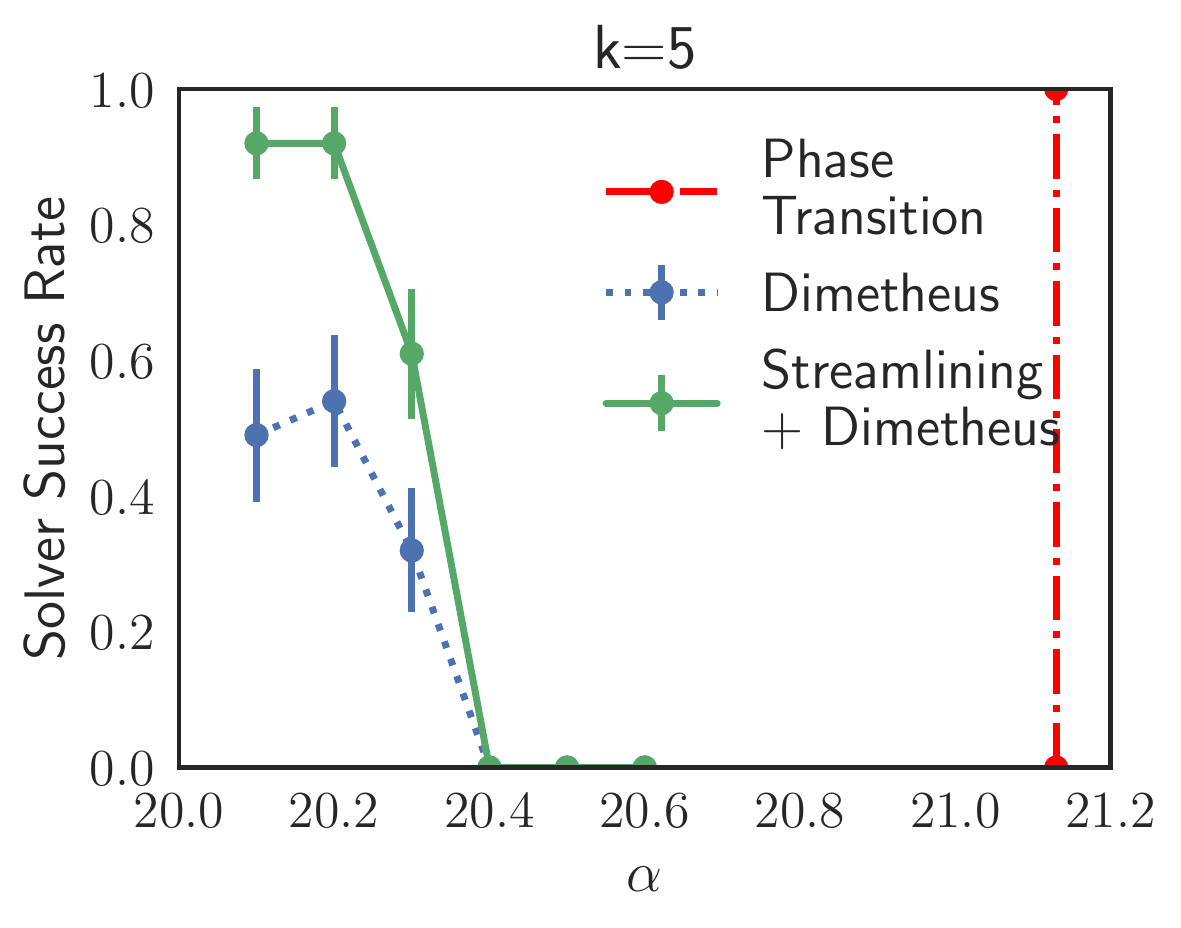}
        \caption{}\label{fig:dimetheus_a}
    \end{subfigure}
    \begin{subfigure}[c]{.3\textwidth}
        \centering
        \includegraphics[scale=0.35]{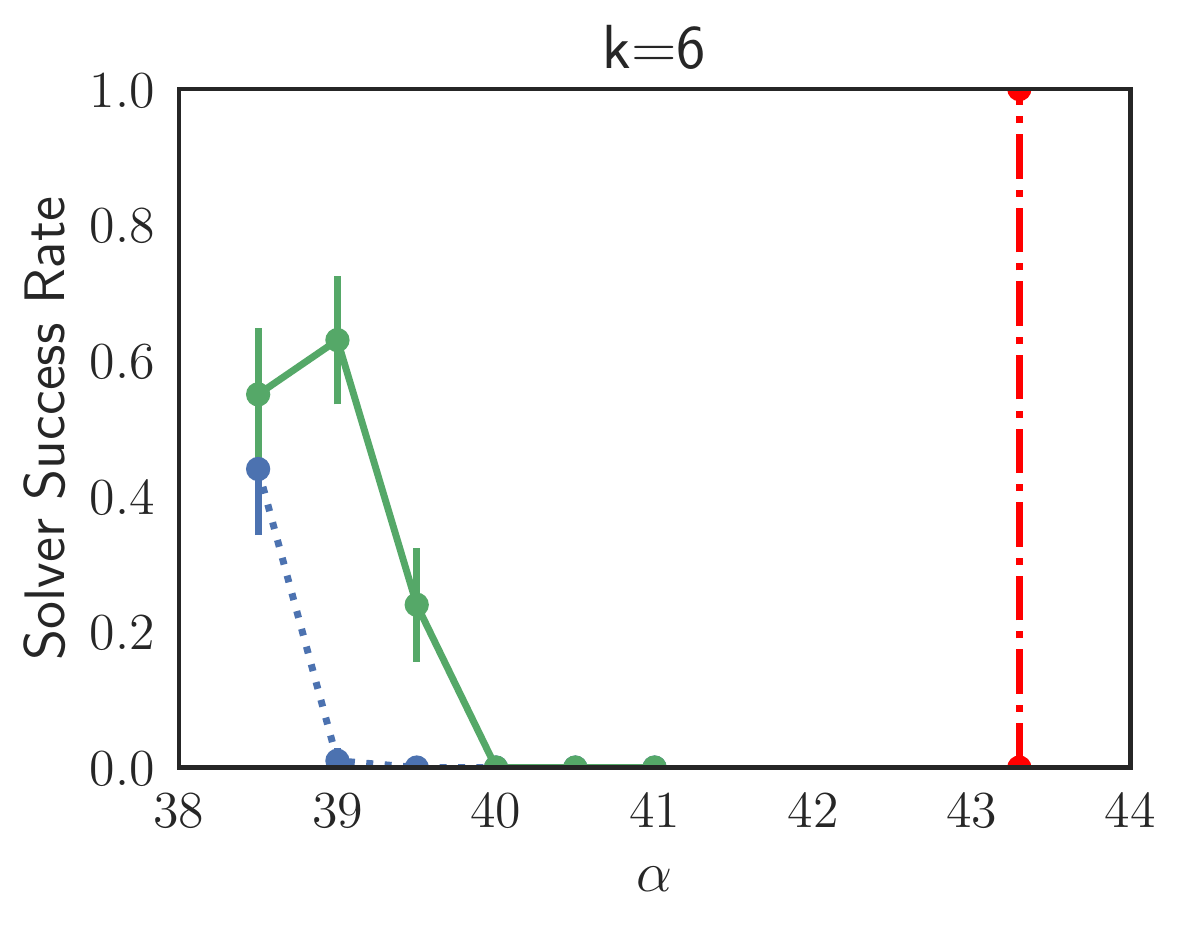}
        \caption{}\label{fig:dimetheus_b}
    \end{subfigure}
    \begin{subfigure}[c]{.3\textwidth}
        \centering
        \includegraphics[scale=0.35]{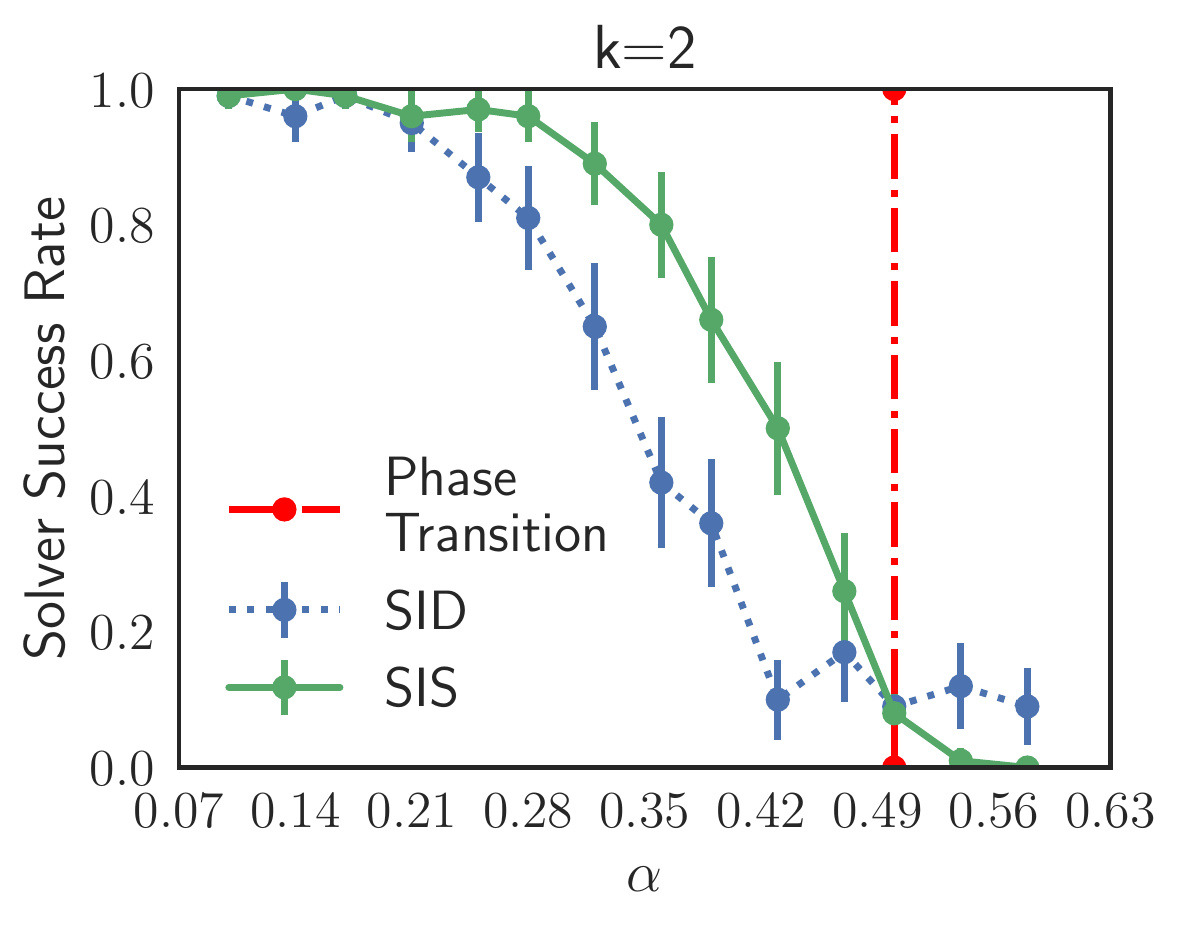}
        \caption{}\label{fig:xorsat}
    \end{subfigure}
\caption{
(a, b) Random $k$-SAT solver rates (with $95\%$ confidence intervals) for $k\in\{5,6\}$ testing integration with Dimetheus. (c) XORSAT solver rates (with $95\%$ confidence intervals).
}
\label{fig:dimetheus_xorsat}
\end{figure}

\subsection{Integration with downstream solvers}

The survey inspired streamlining algorithm provides an easy ``black-box" integration mechanism with other solvers. By adding streamlining constraints in the first few iterations as a preprocessing routine, the algorithm carefully prunes the search space and modifies the original formula that can be subsequently fed to any external downstream solver. We tested this procedure with Dimetheus~\cite{gableske2013performance} -- a competitive ensemble solver that won two recent iterations of the SAT competitions in the random $k$-SAT category. We fixed the hyperparameters to the ones used previously. We did not find any statistically significant change in performance for $k=3,4$; however, we observe significant improvements in solver rates for higher $k$ (Figure~\ref{fig:dimetheus_a},\ref{fig:dimetheus_b}). 

\subsection{Extension to other constraint satisfaction problems}
The survey inspired streamlining algorithm can be applied to any CSP in principle. Another class of CSPs commonly studied is XORSAT. An XORSAT formula is expressed as a conjunction of XOR constraints of a fixed length. Here, we consider constraints of length 2. An XOR operation $\oplus$ between any two variables can be converted to a conjunction of disjunctions by noting that $x_i \oplus x_j = (\neg x_i \vee \neg x_j) \wedge (x_i \vee x_j)$, and hence, any XORSAT formula can be expressed in CNF form. Figure~\ref{fig:xorsat} shows the improvements in performance due to streamlining. While we note that the phase transition is not as sharp as the ones observed for random $k$-SAT (in both theory and practice~\citep{daude2008random,pittel2016satisfiability}), including streamlining constraints can improve the solver performance.

\section{Conclusion}

Variational inference algorithms based on survey propagation achieve impressive performance for constraint satisfaction problems when employing the decimation heuristic.
 We explored cases where decimation failed, motivating a new branching procedure based on streamlining constraints over disjunctions of literals. Using these constraints, we developed survey inspired streamlining, an improved algorithm for solving CSPs via variational approximations. Empirically, we demonstrated improvements over the decimation heuristic on random CSPs that exhibit sharp phase transitions for a wide range of constraint densities. Our solver is available publicly at \url{https://github.com/ermongroup/streamline-vi-csp}.

An interesting direction for future work is to integrate streamlining constraints with backtracking. Backtracking expands the search space, and hence it introduces a computational cost but typically improves statistical performance. Similar to the backtracking procedure proposed for decimation~\cite{marino2016backtracking}, we can backtrack (delete) streamlining constraints that are unlikely to render the original formula satisfiable during later iterations of survey propagation. Secondly, it would be interesting to perform survey propagation on clusters of variables and to use the joint marginals of the clustered variables to decide which streamlining constraints to add. The current approach makes the simplifying assumption that the variable magnetizations are independent of each other.
Performing survey propagation on clusters of variables could greatly improve the variable selection while incurring only a moderate computational cost. Finally, it would be interesting to extend the proposed algorithm for constraint satisfaction in several real-world applications involving combinatorial optimization such as planning, scheduling, and probabilistic inference~\citep{hromkovivc2013algorithmics,chen2014multiplayer,chen2017multiplayer,zhou2014convexity,zhou2018efficient}.

\clearpage
\newpage
\section*{Acknowledgments}
This research was supported by NSF (\#1651565, \#1522054, \#1733686) and FLI. AG is supported by a Microsoft Research PhD Fellowship and a Stanford Data Science Scholarship. We are grateful to Neal Jean for helpful comments on early drafts.

\bibliographystyle{abbrv}
\bibliography{refs}

\begin{thebibliography}{10}

\bibitem{achim2016fourier}
T.~Achim, A.~Sabharwal, and S.~Ermon.
\newblock Beyond parity constraints: Fourier analysis of hash functions for
  inference.
\newblock In {\em International Conference on Machine Learning}, 2016.

\bibitem{achlioptasstochastic}
D.~Achlioptas and P.~Jiang.
\newblock Stochastic integration via error-correcting codes.
\newblock In {\em Uncertainty in Artificial Intelligence}, 2015.

\bibitem{achlioptas2004threshold}
D.~Achlioptas and Y.~Peres.
\newblock The threshold for random-\uppercase{SAT} is $2^k \log 2 -
  \text{\uppercase{o}}(k)$.
\newblock {\em Journal of the American Mathematical Society}, 17(4):947--973,
  2004.

\bibitem{banko2007open}
M.~Banko, M.~J. Cafarella, S.~Soderland, M.~Broadhead, and O.~Etzioni.
\newblock Open information extraction for the web.
\newblock In {\em International Joint Conference on Artificial Intelligence},
  2007.

\bibitem{biere2009handbook}
A.~Biere, M.~Heule, H.~van Maaren, and T.~Walsh.
\newblock Handbook of satisfiability. {F}rontiers in artificial intelligence
  and applications, vol. 185, 2009.

\bibitem{blei2017variational}
D.~M. Blei, A.~Kucukelbir, and J.~D. McAuliffe.
\newblock Variational inference: A review for statisticians.
\newblock {\em Journal of the American Statistical Association},
  112(518):859--877, 2017.

\bibitem{braunstein2005survey}
A.~Braunstein, M.~M{\'e}zard, and R.~Zecchina.
\newblock Survey propagation: An algorithm for satisfiability.
\newblock {\em Random Structures \& Algorithms}, 27(2):201--226, 2005.

\bibitem{chen2014multiplayer}
M.~Chen, Z.~Zhou, and C.~J. Tomlin.
\newblock Multiplayer reach-avoid games via low dimensional solutions and
  maximum matching.
\newblock In {\em American Control Conference}, 2014.

\bibitem{chen2017multiplayer}
M.~Chen, Z.~Zhou, and C.~J. Tomlin.
\newblock Multiplayer reach-avoid games via pairwise outcomes.
\newblock {\em IEEE Transactions on Automatic Control}, 62(3):1451--1457, 2017.

\bibitem{coja2016asymptotic}
A.~Coja-Oghlan and K.~Panagiotou.
\newblock The asymptotic $k$-\uppercase{SAT} threshold.
\newblock {\em Advances in Mathematics}, 288:985--1068, 2016.

\bibitem{daude2008random}
H.~Daud{\'e} and V.~Ravelomanana.
\newblock Random 2-{XORSAT} at the satisfiability threshold.
\newblock In {\em Latin American Symposium on Theoretical Informatics}, pages
  12--23. Springer, 2008.

\bibitem{ding2015proof}
J.~Ding, A.~Sly, and N.~Sun.
\newblock Proof of the satisfiability conjecture for large $k$.
\newblock In {\em Symposium on Theory of Computing}, 2015.

\bibitem{do2001planning}
M.~B. Do and S.~Kambhampati.
\newblock Planning as constraint satisfaction: Solving the planning graph by
  compiling it into csp.
\newblock {\em Artificial Intelligence}, 132(2):151--182, 2001.

\bibitem{ermon2014low}
S.~Ermon, C.~P. Gomes, A.~Sabharwal, and B.~Selman.
\newblock Low-density parity constraints for hashing-based discrete
  integration.
\newblock In {\em International Conference on Machine Learning}, 2014.

\bibitem{euzenat2007ontology}
J.~Euzenat, P.~Shvaiko, et~al.
\newblock {\em Ontology matching}, volume 333.
\newblock Springer, 2007.

\bibitem{gableske2013performance}
O.~Gableske, S.~M{\"u}elich, and D.~Diepold.
\newblock On the performance of {CDCL}-based message passing inspired
  decimation using $\rho$-$\sigma$-{PMP}-i.
\newblock {\em Pragmatics of SAT, POS}, 2013.

\bibitem{gomes2004streamlined}
C.~Gomes and M.~Sellmann.
\newblock Streamlined constraint reasoning.
\newblock In {\em Principles and Practice of Constraint Programming}, pages
  274--289. Springer, 2004.

\bibitem{gomes2007xors-csp}
C.~P. Gomes, W.~J. van Hoeve, A.~Sabharwal, and B.~Selman.
\newblock Counting {CSP} solutions using generalized {XOR} constraints.
\newblock In {\em AAAI Conference on Artificial Intelligence}, 2007.

\bibitem{grover2016variational}
A.~Grover and S.~Ermon.
\newblock Variational bayes on {M}onte {C}arlo steroids.
\newblock In {\em Advances in Neural Information Processing Systems}, 2016.

\bibitem{hromkovivc2013algorithmics}
J.~Hromkovi{\v{c}}.
\newblock {\em Algorithmics for hard problems: introduction to combinatorial
  optimization, randomization, approximation, and heuristics}.
\newblock Springer Science \& Business Media, 2013.

\bibitem{kroc2009message}
L.~Kroc, A.~Sabharwal, and B.~Selman.
\newblock Message-passing and local heuristics as decimation strategies for
  satisfiability.
\newblock In {\em Symposium on Applied Computing}, 2009.

\bibitem{maneva2007new}
E.~Maneva, E.~Mossel, and M.~J. Wainwright.
\newblock A new look at survey propagation and its generalizations.
\newblock {\em Journal of the ACM}, 54(4):17, 2007.

\bibitem{marino2016backtracking}
R.~Marino, G.~Parisi, and F.~Ricci-Tersenghi.
\newblock The backtracking survey propagation algorithm for solving random
  $k$-{SAT} problems.
\newblock {\em Nature communications}, 7:12996, 2016.

\bibitem{mezard2002analytic}
M.~M{\'e}zard, G.~Parisi, and R.~Zecchina.
\newblock Analytic and algorithmic solution of random satisfiability problems.
\newblock {\em Science}, 297(5582):812--815, 2002.

\bibitem{mitchell1992hard}
D.~Mitchell, B.~Selman, and H.~Levesque.
\newblock Hard and easy distributions of \uppercase{SAT} problems.
\newblock In {\em AAAI Conference on Artificial Intelligence}, 1992.

\bibitem{molloy2003models}
M.~Molloy.
\newblock Models for random constraint satisfaction problems.
\newblock {\em SIAM Journal on Computing}, 32(4):935--949, 2003.

\bibitem{scheduling}
W.~Nuijten.
\newblock {\em Time and resource constrained scheduling: a constraint
  satisfaction approach}.
\newblock PhD thesis, TUE: Department of Mathematics and Computer Science,
  1994.

\bibitem{pittel2016satisfiability}
B.~Pittel and G.~B. Sorkin.
\newblock The satisfiability threshold for $k$-{XORSAT}.
\newblock {\em Combinatorics, Probability and Computing}, 25(2):236--268, 2016.

\bibitem{ren2018structured}
H.~Ren, R.~Stewart, J.~Song, V.~Kuleshov, and S.~Ermon.
\newblock Adversarial constraint learning for structured prediction.
\newblock {\em International Joint Conference on Artificial Intelligence},
  2018.

\bibitem{richardson2006markov}
M.~Richardson and P.~Domingos.
\newblock Markov logic networks.
\newblock {\em Machine Learning}, 62(1):107--136, 2006.

\bibitem{selman1994noise}
B.~Selman, H.~A. Kautz, and B.~Cohen.
\newblock Noise strategies for improving local search.
\newblock In {\em AAAI Conference on Artificial Intelligence}, 1994.

\bibitem{singla2006entity}
P.~Singla and P.~Domingos.
\newblock Entity resolution with {M}arkov logic.
\newblock In {\em International Conference on Data Mining}, 2006.

\bibitem{stewart2017label}
R.~Stewart and S.~Ermon.
\newblock Label-free supervision of neural networks with physics and domain
  knowledge.
\newblock In {\em AAAI Conference on Artificial Intelligence}, 2017.

\bibitem{wainwright2008graphical}
M.~J. Wainwright and M.~I. Jordan.
\newblock Graphical models, exponential families, and variational inference.
\newblock {\em Foundations and Trends in Machine Learning}, 1(1-2):1--305,
  2008.

\bibitem{Wei:2004:TES:1597148.1597256}
W.~Wei, J.~Erenrich, and B.~Selman.
\newblock Towards efficient sampling: exploiting random walk strategies.
\newblock In {\em AAAI Conference on Artificial Intelligence}, 2004.

\bibitem{zhao2016closing}
S.~Zhao, S.~Chaturapruek, A.~Sabharwal, and S.~Ermon.
\newblock Closing the gap between short and long {XOR}s for model counting.
\newblock In {\em AAAI Conference on Artificial Intelligence}, 2016.

\bibitem{zhou2018efficient}
Z.~Zhou, J.~Ding, H.~Huang, R.~Takei, and C.~Tomlin.
\newblock Efficient path planning algorithms in reach-avoid problems.
\newblock {\em Automatica}, 89:28--36, 2018.

\bibitem{zhou2017stochastic}
Z.~Zhou, P.~Mertikopoulos, N.~Bambos, S.~Boyd, and P.~W. Glynn.
\newblock Stochastic mirror descent in variationally coherent optimization
  problems.
\newblock In {\em Advances in Neural Information Processing Systems}, 2017.

\bibitem{zhou2014convexity}
Z.~Zhou, M.~P. Vitus, and C.~J. Tomlin.
\newblock Convexity verification for a hybrid chance constrained method in
  stochastic control problems.
\newblock In {\em American Control Conference}, 2014.

\end{thebibliography}
\newpage
\section*{Appendices}
\begin{appendices}
\section{Survey propagation subroutines}\label{app:sp_subroutines}
Here, we provide details regarding the subroutines regarding the message passing updates SP-Update~[Line~\ref{line:sp_update}] and the marginalization procedure Marginalize~[Line~\ref{line:marginalize}] used in Algorithms~\ref{alg:spdecimate} and~\ref{alg:spStreamline}.

\begin{algorithm}[h]
   \caption{SP-Update($V,C,\{\eta_{a\rightarrow i}\}$)}
   \label{alg:spdupd}
\begin{algorithmic}[1]

 \ForAll{$a \in C, i \in V(a)$}
 
   \ForAll{$j\in V(a)\backslash i$}
    \State \resizebox{.8\hsize}{!} {$\eta_{j\rightarrow a}^u \leftarrow \bigg[ 1- \prod\limits_{b \in C_a^u(j)}(1-\eta_{a \rightarrow j})\bigg]\prod\limits_{b \in C_a^s(j)}(1-\eta_{a \rightarrow j}) $}
\State \resizebox{.8\hsize}{!}{$\eta_{j\rightarrow a}^s \leftarrow 
\bigg[ 1- \prod\limits_{b \in C_a^s(j)}(1-\eta_{a \rightarrow j})\bigg]\prod\limits_{b \in C_a^u(j)}(1-\eta_{a \rightarrow j})$}
\State $\eta_j^0 \leftarrow \prod\limits_{b \in C(j)\backslash a}(1-\eta_{a \rightarrow j})$
   \EndFor
   \LineComment{Compute new message}
   \State $\eta_{a\rightarrow i}' \leftarrow \prod \limits_{j\in V(a)\backslash i} \frac{\eta_{j\rightarrow a}^u}{\eta_{j\rightarrow a}^u+ \eta_{j\rightarrow a}^s + \eta_j^0}$
\EndFor \\
   \Return $\{\eta_{a\rightarrow i}'\}$
\end{algorithmic}
\end{algorithm}

\subsection{SP-Update}
If we let $C_a^s(i)$ to be the set of clauses where $i$ appears with the same sign as in clause $a$ and $C_a^u(i)$ to be the remaining clauses, then the subroutine in Algorithm~\ref{alg:spdupd} provides the message passing equations required to update $\eta_{a\rightarrow i}$.

\subsection{Marginalize}
We can estimate the approximate marginals 
$\mu_i(0),\mu_i(1),\mu_i(*)$
for each variable $i$ by normalizing the following quantities so that they sum to one:
\begin{align}
\label{eq:marg1}
\mu_i(1)
&\propto \bigg[ 1- \prod_{a \in C_\Minus(i)}(1-\eta_{a \rightarrow i}^*)\bigg]\prod_{a \in C_\Plus(i)}(1-\eta_{a \rightarrow i}^*) \\
\label{eq:marg2}
\mu_i(0)
&\propto 
\bigg[ 1- \prod_{a \in C_\Plus(i)}(1-\eta_{a \rightarrow i}^*)\bigg]\prod_{a \in C_\Minus(i)}(1-\eta_{a \rightarrow i}^*) \\
\mu_i(*)
&\propto 
\prod_{a \in C(i)}(1-\eta_{a \rightarrow i}^*)
\label{eq:marg3}
\end{align}
where $C(i)$ denotes the set of clauses that $i$ appears in and $C_\Minus(i)$ and $C_\Plus(i)$ are the clause subsets where $i$ appears negated and unnegated respectively. 

\end{appendices}

\end{document}